\pdfoutput=1

\documentclass[10pt,journal,compsoc]{IEEEtran}
%


%

%
\ifCLASSOPTIONcompsoc
  \usepackage[nocompress]{cite}
\else
  \usepackage{cite}
\fi
%

\usepackage{graphicx}
\usepackage{amsmath}
\usepackage{amsthm}
\usepackage{booktabs}
\usepackage{algorithm}
\usepackage{multirow}
\usepackage{algorithmic}
\usepackage{graphicx}
\usepackage{subfigure}
\usepackage{amsfonts}
\usepackage{amsthm}
\usepackage{multirow}
\usepackage{makecell}
\usepackage{subfigure}

\newtheorem{theorem}{Theorem}
\newtheorem{lemma}{Lemma}
%
\ifCLASSINFOpdf
\else
\fi
\hyphenation{op-tical net-works semi-conduc-tor}

\begin{document}
%
\title{Domain-Class Correlation Decomposition for \\
	Generalizable Person Re-Identification}
\markboth{IEEE TRANSACTIONS ON MULTIMEDIA}%
{Shell \MakeLowercase{\textit{et al.}}: Bare Demo of IEEEtran.cls for Computer Society Journals}
%
\author{Kaiwen~Yang,~Xinmei~Tian,~\IEEEmembership{Member,~IEEE}
	\IEEEcompsocitemizethanks{\IEEEcompsocthanksitem K. Yang and X. Tian are with CAS Key Laboratory of Technology in Geo-spatial Information Processing and Application System, University of Science and Technology of China, Hefei 230027, China\protect\\
		E-mail: kwyang@mail.ustc.edu.cn, xinmei@mail.ustc.edu.cn}}


\IEEEtitleabstractindextext{%
\begin{abstract}
Domain generalization in person re-identification is a highly important meaningful and practical task in which a model trained with data from several source domains is expected to generalize well to unseen target domains. Domain adversarial learning is a promising domain generalization method that aims to remove domain information in the latent representation through adversarial training. However, in person re-identification, the domain and class are correlated, and we theoretically show that domain adversarial learning will lose certain information about class due to this domain-class correlation. Inspired by casual inference, we propose to perform interventions to the domain factor $d$, aiming to decompose the domain-class correlation. To achieve this goal, we proposed estimating the resulting representation $z^{*}$ caused by the intervention through first- and second-order statistical characteristic matching. Specifically, we build a memory bank to restore the statistical characteristics of each domain. Then, we use the newly generated samples $\{z^{*},y,d^{*}\}$ to compute the loss function. These samples are domain-class correlation decomposed; thus, we can learn a domain-invariant representation that can capture more class-related features. Extensive experiments show that our model outperforms the state-of-the-art methods on the large-scale domain generalization Re-ID benchmark.
\end{abstract}

\begin{IEEEkeywords}
Person re-identification, domain generalization, information entropy, casual inference
\end{IEEEkeywords}}

\maketitle

\IEEEdisplaynontitleabstractindextext

%
\IEEEpeerreviewmaketitle

\IEEEraisesectionheading{\section{Introduction}\label{sec:introduction}}

%
%
%
%
\IEEEPARstart{P}{erson} re-identification (ReID) aims to identify a specific person from a set of surveillance cameras over time. ReID plays a significant role in many vision-related applications, e.g., video surveillance, content-based video retrieval, and identification from CCTV cameras. With the development of deep convolutional neural networks (CNNs) \cite{he2016deep}, person ReID methods have achieved remarkable performance in a supervised setting, where the training and testing samples are from the same domain \cite{yang2018local,luo2019strong}. However, these supervised methods learn deep features that naturally
overfit to the training dataset (source domain) and thus suffer severe performance degradation when applied to unseen target domains. As a result, these supervised methods are problematic due to expensive labeling costs and are hardly applicable in practice. To tackle these issues, unsupervised domain adaptation (UDA) methods have been introduced into person ReID \cite{ge2020mutual}. The UDA methods utilize both labeled source domain data and unlabeled target domain data to learn a model that fits the data distribution in the target domain. Although UDA methods alleviate the need to label the target domain, they still require a large amount of target data to update the model.

Compared to UDA, domain generalization (DG) addresses the more challenging setting where we use labeled data from several source domains to train a model and expect the model to generalize well to unseen target domains. Therefore, DG is much more suitable for real-world applications. As pointed out by \cite{li2018domain}, DG aims to learn a domain-invariant feature representation that is also sufficiently discriminative for underlying tasks. Domain adversarial learning (DAL) \cite{ganin2016domain,li2018domain,li2018deep,akuzawa2019adversarial} is  a promising method for achieving this goal. DAL was originally invented for UDA \cite{ganin2016domain} but later was proven to be effective for DG \cite{li2018domain}. DAL introduces a domain discriminator parameterized by deep neural networks and learns domain-invariant feature representation by deceiving it.

However, most DG methods consider a homogeneous setting where different domains share the same class space (Fig. \ref{fig1}(a)). By contrast, person ReID is a heterogeneous setting where different domains have different class spaces (Fig. \ref{fig1}(b)). As shown in Fig. \ref{fig1}(b), $y$ and $d$ become correlated due to some confounders $h$. $h$ could be spatial and temporal deviations when collecting data for different domains. As a result, in person ReID, different domains contain different classes. The conditional entropy satisfies $H(d|y)<H(d)$ and $H(y|d)<H(y)$, and thus, the mutual information satisfies $I(y,d) = H(d)-H(d|y)=H(y)-H(y|d)>0$, clearly indicating the correlation between $y$ and $d$. Therefore, the direct application DAL to person ReID is problematic. Theorem 1 shows the degree of discriminative information loss when we learn domain-invariant feature representation using DAL in person ReID.

\begin{figure*}
	\centering
	\includegraphics[width=16cm]{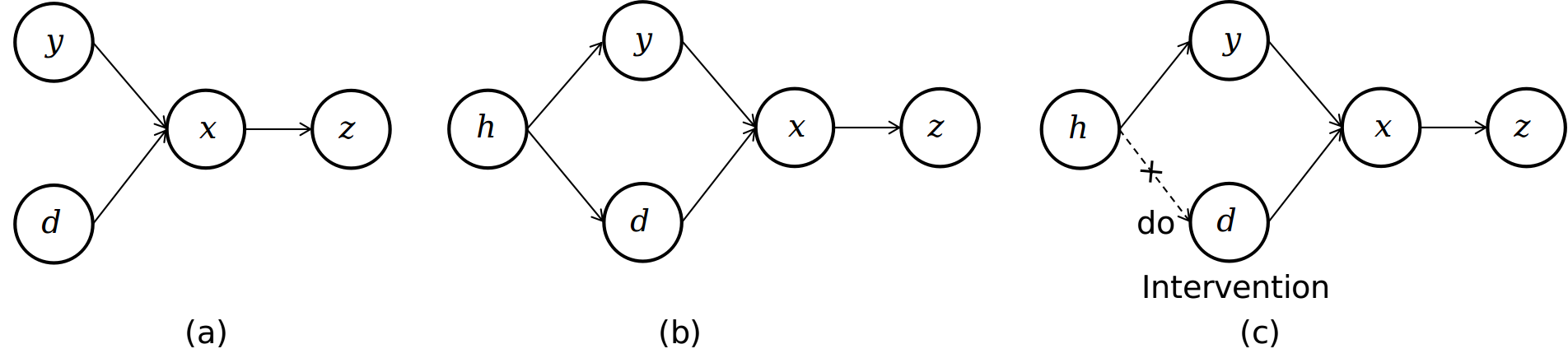}

	\caption{Causal graphs of different situations. $y$ is the class label, $d$ is the domain label, $x$ is the sample, $z$ denotes the embedding and $h$ is the confounder. (a) Homogeneous domain generalization, where $y$ and $d$ are uncorrelated. (b) Domain generalization in person ReID, where $y$ and $d$ have confounders $h$ and thence are correlated. (c) Intervene factor $d$; thus, the backdoor path between $y$ and $d$ is cut off. }
	\label{fig1}

\end{figure*}

Casual inference \cite{pearl2016causal,yao2020survey} reveals that if we perform intervention for one factor, we will remove all of its causes (Fig. \ref{fig1}(c)). In this manner, the back door path  $d\leftarrow h \rightarrow y $ is cut off. Thus, when we evaluate the effect of $d$ on final embedding $z$, it will no longer be disturbed by $y$ through the back door path. Now, the problem lies in how to obtain $z^{*}=z(x^{*})=z(x(y,do(d)))$ if we fix $y$ and perform intervention for $d$, where $*$ denotes the change of the variable value caused by the intervention and $do()$ means intervention. According to back door adjustment formula \cite{pearl2016causal}, we can take the expectation of $h$ to compute $z^{*}$ using observed data. However, as mentioned before, $h$ in person ReID denotes spatial and temporal deviations that are unobservable. Therefore, in this paper, we directly approximate $z^{*}$ according to both the first-order and second-order statistical characteristics of feature maps. \cite{huang2017arbitrary} pointed out that the style of a feature map is represented by the mean of each channel, while \cite{li2017demystifying} further utilized the covariance matrix to represent the feature map style. Inspired by these works, when we $do(d)$, we approximate the resulting $z^{*}$ by transforming the feature map $f^{*}$ to match the first- and second-order statistical characteristics of another domain. To acquire the statistical characteristics of each domain, we build a memory bank to count the mean and covariance of the observed feature map belonging to each domain. Then, we use the generated sample $\{z^{*},y,do(d)\}$ to compute the loss function and update the model. In this way, we decompose the domain-class correlation and thus avoid the disturbance of the class factor when we learn domain-invariant representations.

Our main contributions are summarized as follows:
\begin{itemize}
	\item From the perspective of entropy, we theoretically reveal the potential problem when applying domain adversarial learning to person ReID due to the domain-class correlation.
	\item Inspired by casual inference, we introduce intervention for factor $d$, thereby decomposing the domain-class correlation.
	\item We propose to estimate and obtain new samples $\{z^{*},y,do(d)\}$ caused by intervention through matrix transformation to match the first-order and second-order
	statistical characteristics. We build a memory bank to restore the statistical characteristics of each domain by dynamic updating through the training procedure. We theoretically prove the effectiveness of our method.
	\item  We apply our method to both MobileNet-V2 and ResNet-50 and achieve state-of-the-art results on the large-scale domain generalization ReID benchmark.
	\item We design an extended ablation study to demonstrate the effectiveness of each component of our method.
\end{itemize}



\begin{figure*}[h]
	\centering
	\includegraphics[width=0.95\textwidth]{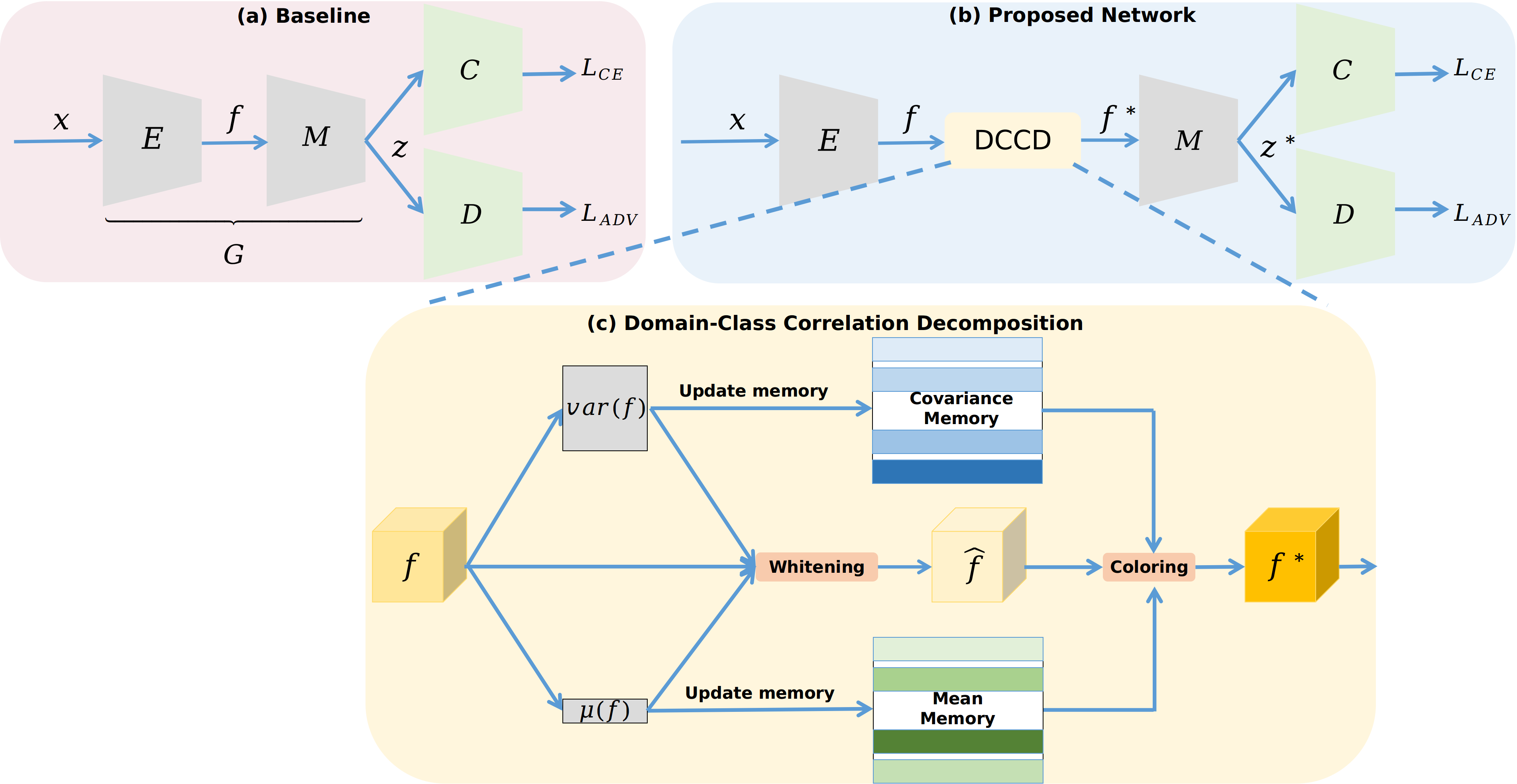}

	\caption{Overall flowchart. (a) Baseline domain adversarial learning. $E$ denotes an encoder network, and $M$ denotes a mapping network. $C$ is a classifier, and $D$ is a domain discriminator. (b) The proposed domain-class correlation decomposition module is inserted into a DAL framework. DCCD generates decomposed new samples to compute the adversarial objective function. (c) Our proposed DCCD module. Two memory banks are introduced to store the domain style information. For a feature map $f$, its own domain style is first removed through whitening and then transferred to another domain style. The outputted $f^{*}$ can have different domain styles; thus, its domain-class correlation is decomposed. }

	\label{fig3}
\end{figure*}

\section{Related work}
\textbf{Domain Generalization.}
Domain generalization methods train a model that is expected to
be generalized to the unseen target by assuming that only multiple source domains are available \cite{ganin2016domain,li2018domain,li2018deep,akuzawa2019adversarial}. For handling this issue, data augmentation is a direct and effective approach for generating diverse data to increase the generalization capability of a model. \cite{zhou2020learning} generate novel domains by maximizing the difference between source and target domains under the constraint of semantic consistency. \cite{qiao2020learning} applies adversarial training to generate challenging examples by a Wasserstein autoencoder (WAE)~\cite{tolstikhin2017wasserstein}. \cite{zhou2021domain} performs Mixup~\cite{zhang2017mixup} in feature space which does not generate raw training data explicitly.

Another type of method aims to learn domain-invariant representations so that the representations can generalize well. Toward this goal, \cite{zhou2020domain, pan2010domain, li2018domain, ghifary2016scatter} explicitly align the data distribution of the two domains. \cite{zhou2020domain} minimizes the Wasserstein distance between different source domains through optimal transport. \cite{pan2010domain} improves the feature transport by proposing adapted transfer component analysis. \cite{li2018domain2} further aligns features from different domains by minimizing the MMD distance of the class-conditional distribution. Domain adversarial learning is another approach for learning domain-invariant representations. \cite{li2018domain} proposes domain adversarial learning that introduces a domain discriminator seeking to classify which domain the representation belongs to. Meanwhile, the generator is trained to extract domain-invariant representations that can deceive the domain discriminator. \cite{li2018deep} further propose building a domain discriminator for each class separately. In this way, they successfully learn domain-invariant representations w.r.t. the joint distribution $P(T(x),y)$. \cite{akuzawa2019adversarial} first considered domain-class correlation and proposed a soft constraint for the domain discriminator. In this paper, we consider domain generalization in person ReID, where domain and class are totally correlated, and we propose a novel framework that can generate domain-class uncorrelated samples.

\textbf{Person Re-Identification.}
Deep learning-based person ReID has recently achieved great success, including but not limited to stripe-based methods~\cite{sun2019learning, wang2018learning, fan2018scpnet，yang2020part, wan2019concentrated}, pose-guided methods~\cite{zhao2017spindle, wei2017glad, sarfraz2018pose} and attention-based methods~\cite{si2018dual, li2018harmonious, li2018diversity}.

To solve the domain shift problem, unsupervised domain adaptation (UDA) has been extensively studied in person ReID. UDA utilizes labeled source domain data and unlabeled target domain data to enhance representation learning. There are two main types of UDA-based ReID methods. The first category of methods transfers the style of the source domain images to that of the target domain images to provide supervision in the target domain~\cite{zou2020joint, zhong2018generalizing, chen2019instance, deng2018image, li2021intra,xie2020progressive, ding2020adaptive}. \cite{deng2018image} proposes a feature-level similarity constraint to better translate the images. The second type of method estimates the target domain pseudo label~\cite{fan2018unsupervised, song2020unsupervised, tang2019unsupervised, yu2019unsupervised}. \cite{fan2018unsupervised}  iteratively by hard clustering. \cite
{fu2019self} further utilizes both global and local features to build multiple independent clusters.

Moreover, domain generalizable person ReID has recently been studied; this method aims to learn discriminative representation with only labeled source domain  data. \cite{song2019generalizable} first defines the large-scale domain generalizable person ReID benchmark with multiple source and target domains. They also propose a domain-invariant mapping network (DIMN) that learns to map representations into domain-invariant space. \cite{jia2019frustratingly} designed a simple and effective backbone for generalizable ReID by inserting instance normalization (IN)~\cite{ulyanov2016instance} into several bottlenecks to remove domain-specific features. \cite{chen2020dual} applies domain adversarial learning to generalizable ReID. They also design several objective functions to bridge the gap between the features from different domains. However, in this paper, we note a problem that may hinder domain adversarial learning in ReID and propose a novel method to solve this problem.

\textbf{Style Transfer.}
Style transfer method takes a content image and a style
image as inputs to synthesize an image with the look of
the former and feel if the latter. \cite{gatys2016image} first proposed gradient descent on the original image through style loss and content loss. \cite{huang2017arbitrary} match the means and variances of deep features between content and style images. \cite{li2017universal} further exploit the
feature covariance matrix instead of the variance by embedding both whitening and coloring processes within a pretrained encoder-decoder module. \cite{li2017demystifying} reveal that matching the covariance of two feature maps is equivalent to minimizing their maximum mean discrepancy (MMD) distance. \cite{li2019learning} accelerate the style transfer procedure by using a neural network to estimate the transformation matrix.

\textbf{Casual Inference.}
\cite{gong2016domain} first propose a causal graph to describe the
generative process of an image as being generated by "domain" and "class". They use this graph for learning invariant components that transfer across domains. \cite{li2018deep,akuzawa2019adversarial}
extend \cite{gong2016domain} with DAL \cite{ganin2016domain}. \cite{heinze2020conditional} propose a similar causal graph for
images while adding auxiliary information such that some images are of the same instance with a different ``style''. This allows the model to identify core features. It also views the ``object'' and ``attribute'' as playing
mutual roles, making their inferred representations invariant with respect to each other.

\section{Proposed Method}

\subsection{Preliminary}
\textbf{Problem formulation}
We start with a formal description of the domain generalizable person ReID. Suppose we have $K$ source domains (datasets) $\mathcal{S}=\{\mathcal{S}_{k}\}_{k=1}^{K}$. Each source domain contains its image-label pairs $\mathcal{S}_{k}=\left\{\left(\boldsymbol{x}_{i}^{k}, y_{i}^{k}\right)\right\}_{i=1}^{N_{k}}$. Each sample $\boldsymbol{x}_{i}^{k}$ is associated with an identity label $y_{i}^{k} \in \mathcal{Y}_{k}=\left\{1,2, \ldots, M_{k}\right\}$, where $M_{k}$ is the number of identities in the source dataset $\mathcal{S}_{k}$. Different domains have completely disjoint label spaces in the DG Re-ID setting; thus, the total number of identities from all source domains can be expressed as $M=\sum_{k=1}^{K} M_{k}$. In the training phase, we aggregate the samples from all source domains together with $\mathcal{S}=\{(\boldsymbol{x}_{i},y_{i},d_{i})\}_{i=1}^{N}$, where $N=\sum_{k=1}^K N_{k}$ denotes the number of samples from all the source domains, $y_{i} \in \{1,2,...,M\}$ and $d_{i} \in \{1,2,...,K\}$. During the test phase, we perform a person retrieval task on unseen target domains without additional model updating.

\textbf{Domain adversarial learning}
Domain adversarial learning is a promising method for achieving a generalizable model \cite{ganin2016domain,li2018domain}. We now briefly overview DAL and take it as a baseline model for generalizable person re-identification. In section 3.2, we will theoretically analyze the problem when directly applying DAL to person ReID and then propose our solution.

DAL aims to learn a domain-invariant feature representation that is discriminative for underlying tasks.
As shown in Figure 2(a), the entire feature extractor $G$ consists of $E$ and $M$. $E$ encodes the input image into a feature map, while $M$ maps the feature map into latent representation.
DAL uses a domain discriminator $D$ to predict domain labels from representation $z$ and simultaneously trains the feature extractor $G$ to remove domain information by deceiving $D$. Moreover, a classifier $C$ is utilized to learn discriminative latent representation. Formally, we use $f_{G}(x)$, $q_{D}(d|z)$ and $q_{C}(y|z)$ to denote the feature extractor, the discriminator and the classifier, respectively. Then, the loss function of DAL is expressed as follows:

\begin{equation}
	\begin{aligned}
		\min _{G, C} \max _{D} J(G, C, D)&=\mathbb{E}_{p(x, d, y)}\left[-L_{ADV}+L_{CE}\right], \\
		\quad \text { where } L_{ADV}&=-\log q_{D}\left(d \mid z=f_{G}(x)\right),\\
		\quad L_{CE}&=-\log q_{C}\left(y \mid z=f_{G}(x)\right) \label{e1}
	\end{aligned}
\end{equation}

Here, $L_{CE}$ is the cross-entropy loss that guarantees that the learned representation is sufficiently discriminative to classify people. $L_{ADV}$ is the min-max game between $G$ and $D$ that pushes the learned representation to be domain-invariant.

In person ReID, we directly use latent representation $z$ for retrieval during an inference phase. We mainly rely on the latent representation $z$ rather than the classifier $C$. Thus, to make equation (\ref{e1}) more explicit, for a specific $G$, we can assume that $C$ and $M$ obtain their optimal solutions $C^{*}$ and $M^{*}$ accordingly. The optimal $C^{*}$ and $M^{*}$ should reflect the true probability relation of the latent representation $z$. Formally, we have $q_{D^{*}}\left(d \mid z=f_{G}(x)\right)=p_{G}(d\mid z)$ and $q_{C^{*}}\left(y \mid z=f_{G}(x)\right)=p_{G}(y\mid z)$. Substituting $q_{D^{*}}$ and $q_{C^{*}}$ into equation (\ref{e1}), we can obtain the following concise and obvious loss function:

\begin{equation}
	\begin{aligned}
		\min _{G} J(G)&=- H_{p_{G}}(d \mid z)+H_{p_{G}} (y \mid z) \\
		&=-L_{ADV}+L_{CE}  \label{e2}
	\end{aligned}
\end{equation}

Here, $H_{p_{G}}(d \mid z)$ denotes the conditional entropy with the joint probability distribution $p_{G}(d, z)$. The first term attempts to learn $z$ that is invariant to $d$. Minimization of the second term requires the representation $z$ extracted by $G$ to contain as much information about $y$ as possible.

\subsection{Analysis of Domain-Class Correlation}
As mentioned in section 3.1, person ReID is a heterogeneous task where the identity space across different domains has no overlap. This phenomenon is due to thte deviations in the  collection of ReID data. Specifically, current ReID datasets are collected in specific locations at specific times; thus, different datasets have different identities. We show the casual graph of ReID in Figure \ref{fig1}(b). $h$ denotes deviations in the collection of ReID data and is the cause of both $y$ and $d$. $y$ and $d$ become correlated through a back door path $y \leftarrow z \rightarrow d $ \cite{pearl2016causal}. The mutual information satisfies $I(y,d) = H(d)-H(d|y)=H(y)-H(y|d)>0$. When we study the effect of $d$ on representation $z$, it will be disturbed by $y$ through the back-door path. Intuitively, the domain discriminator may predict domains according to class information. Then, through adversarial training, we may remove the unexpected class information in the latent representation.

Based on the above analysis, let us examine equation (\ref{e2}). Maximization of the first term $H_{p_{G}}(d \mid z)$ aims to learn domain-invariant representations. The optimal solution is $H_{p_{G}}(d \mid z)= H_{p_{G}}(d )$. The optimal solution of the second term is $H_{p_{G}} (y \mid z)=0 $, which means  that latent representation $z$ contains all information about $y$. However, in the following theorem, we will reveal that the optimal solution of the two terms cannot be satisfied simultaneously.

\begin{theorem}
	In ReID, when $H_{p_{G}}(d \mid z)=H(d)$ holds, $H_{p_{G}}(y \mid z) \geq H(d)$, i.e., the feature $z$ cannot contain all information about $y$. \label{t1}
\end{theorem}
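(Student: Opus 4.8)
The plan is to exploit the defining structural fact of the ReID setting---that the source domains have pairwise disjoint label spaces---to obtain a deterministic relationship between the class and domain labels, and then to finish with a single application of the chain rule for conditional entropy.

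First I would make explicit that because each identity $y \in \{1,\dots,M\}$ belongs to exactly one source dataset, the domain label $d$ is a deterministic function of $y$: knowing the class pins down the domain. Consequently $H_{p_{G}}(d \mid y) = 0$, and since conditioning on additional variables cannot increase entropy, $H_{p_{G}}(d \mid y, z) = 0$ as well. I would stress that this holds for \emph{every} feature extractor $G$, because $G$ only shapes the embedding $z$ and leaves the joint law of $(y,d)$ untouched; this point matters because we will later impose the adversarial optimum $H_{p_{G}}(d \mid z) = H(d)$ and need the deterministic relation to survive.

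Next I would decompose the joint conditional entropy $H_{p_{G}}(d, y \mid z)$ in two ways. Expanding with $y$ first gives $H_{p_{G}}(d, y \mid z) = H_{p_{G}}(y \mid z) + H_{p_{G}}(d \mid y, z) = H_{p_{G}}(y \mid z)$, where the second term vanishes by the previous step. Expanding with $d$ first gives $H_{p_{G}}(d, y \mid z) = H_{p_{G}}(d \mid z) + H_{p_{G}}(y \mid d, z)$. Equating the two expansions yields $H_{p_{G}}(y \mid z) = H_{p_{G}}(d \mid z) + H_{p_{G}}(y \mid d, z)$, and since conditional entropy is nonnegative, $H_{p_{G}}(y \mid z) \geq H_{p_{G}}(d \mid z)$.

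Finally I would substitute the hypothesis $H_{p_{G}}(d \mid z) = H(d)$ to conclude $H_{p_{G}}(y \mid z) \geq H(d)$, which is exactly the claim; since there are at least two source domains we have $H(d) > 0$, so $z$ cannot achieve $H_{p_{G}}(y \mid z) = 0$ and therefore cannot retain all information about $y$. I do not expect a genuine obstacle once the disjoint-label-space observation is recorded as the identity $H_{p_{G}}(d \mid y, z) = 0$; the remainder is a routine two-line entropy manipulation. The only step deserving care is the justification that this deterministic relation is a property of the data distribution rather than of $G$, so the bound continues to hold at the adversarial optimum.
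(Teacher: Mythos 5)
Your proposal is correct and follows essentially the same route as the paper: both arguments hinge on the disjoint label spaces forcing $H_{p_{G}}(d \mid y)=0$, and both then apply the chain rule to $H_{p_{G}}(d, y \mid z)$ to obtain $H_{p_{G}}(y \mid z) \geq H_{p_{G}}(d \mid z)$ before substituting the hypothesis. Your two-way expansion is just a repackaging of the paper's inequality chain (their step $H_{p_{G}}(d \mid z) \leq H_{p_{G}}(d, y \mid z)$ is exactly your $H_{p_{G}}(y \mid d, z) \geq 0$), so there is no substantive difference.
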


\begin{proof}
	According to the definition of conditional entropy:
	\begin{equation}
		H(d \mid y) = -\sum\limits_{y \in \mathcal {Y},d \in \mathcal{D}} p(d,y)log \frac{p(d,y)}{p(y)} =0 \label{e14}
	\end{equation}
	Equation (\ref{e14}) is because for $y \in \mathcal {Y}_{d}$ we have $p(y)=p(d,y)$ and for $y  \notin \mathcal {Y}_{d}$ we have $p(d,y)=0$. Thus, the summation equals to zero. (Note: $0log0$ should be treated as being equal to zero.)
	
	Then, we use the property of conditional entropy to expand $H_{p_{G}}(d \mid h)$:
	\begin{equation}
		\begin{aligned}
			H_{p_{G}}(d \mid z) &\leq H_{p_{G}}(d, y \mid z)\\
			&=H_{p_{G}}(d \mid z, y)+H_{p_{G}}(y \mid z) \\
			& \leq H_{p_{G}}(d \mid y)+H_{p_{G}}(y \mid z) \label{e15}
		\end{aligned}
	\end{equation}
	
	\begin{equation} H_{p_{G}}(y \mid z)\geq
		H_{p_{G}}(d \mid z) - H_{p_{G}}(d \mid y) \label{e16}
	\end{equation}
	
	Substituting $H_{p_{G}}(d \mid z)=H(d)$ and $H_{p_{G}}(d \mid y)=0$ into equation (\ref{e16}), we obtain the following:
	\begin{equation}H_{p_{G}}(y \mid h)  \geq H(d)\end{equation}
\end{proof}
This theorem shows that if the representation is domain-invariant (the first term in Equation (\ref{e2}) reaches the optimum), the representation will lose some information about class, and the value of the information loss equals $H(d)$. This because $d$ and $y$ have information overlap; thus, when we remove information about $d$ in $z$, we lose information about $y$.

\subsection{Domain-Class Correlation Decomposition}
From the perspective of casual inference, we can decompose the domain-class correlation using intervention (``do'' operation) \cite{pearl2016causal,yao2020survey}. If we perform the operation on one variable, we fix it to a specific value and delete all edges pointing to the variable. The value of other variables may change accordingly. As shown in Figure \ref{fig1}(c), we perform intervention on $d$; thus, the back door path $d\leftarrow h \rightarrow y $ is cut off.

Now, the challenge is how to obtain the latent representation $z^{*}=z(x^{*})=z(x(y,do(d)))$ after we perform intervention on $d$. The back door adjustment criterion indicates that we can compute the conditional expectation based on the confounder $h$: $p(z \mid do(d)) = \sum \limits_{h}p(z \mid d,h)p(h)$. However, as mentioned above, $h$ is unobservable in ReID.

To tackle this problem, we propose to approximately estimate $z^{*}$ caused by intervention on $d$. Inspired by style transfer \cite{huang2017arbitrary,li2017demystifying} that uses the mean or covariance matrix of the feature maps to define style, we transform the feature map to match the mean and covariance of a specific domain. As shown in Figure \ref{fig3}, if we intervene on $d$ (assigning $d$ to be another specific domain $d^{*}$), we first estimate the feature map $f^{*}$ whose style is transferred to domain $d^{*}$). Then, we compute $z^{*}$ through mapping network $M$. Finally, we use the newly generated sample tuple $\{z^{*},y,d^{*}\}$ to compute the objective function and update the model.

\textbf{Whitening} Let $f_{i} \in \mathbb{R}^{H \times W \times C} $ be the feature map output by encoder $E$. We first subtract its own mean:

\begin{equation}
	\overline{f}_{i}  = f_{i}  - \mu(f),
\end{equation}
where $\mu(f_{i} ) \in \mathbb{R}^{C}$ denotes the mean of each channel. Then, we compute the covariance matrix:

\begin{equation}
	var(f_{i} ) = \overline{f}_{i}  \overline{f}^{T},
\end{equation}
where $var(f_{i} ) \in \mathbb{R}^{C \times C}$ is the covariance matrix of $f_{i} $.
We further whiten $f_{i} $ by matrix transformation:

\begin{equation}
	\hat{f}_{i}  =  W_{i}  \overline{f}_{i},
\end{equation}
where matrix $W_{i}$ satisfies $W_{i}^{T}W_{i} = var(f_{i})$. We use the highly  efficient and well-supported Cholesky decomposition to compute $W_{i}$. $\hat{f}_{i}$ is a whitened feature map that satisfies $\hat{f}_{i}\hat{f}_{i}^{T}=I$, where $I$ is the identity matrix.

\textbf{Memory bank}
We build two memory banks to store the covariance and mean matrices of each domain that represent the integral style of each domain. The memory banks are referred to as $V \in \mathbb{R}^{K \times C \times C} $and $U \in \mathbb{R}^{K \times C}$, where $K$ is the number of source domains. We use $V_{j}$ and $U_{j}$ and denote the covariance and mean matrices of domain $j$.

We update the memory with momentum during the training process. In the $t$-th iteration, the memory bank is updated according to:

\begin{equation}
	V_{j}^{t}=(1-\beta)V_{j}^{t-1}+\beta  \frac{\sum_{i=1}^{b}\mathbb{I}(d_{i}=j)var(f_{i})}{\sum_{i=1}^{b}\mathbb{I}(d_{i}=j)},
\end{equation}

\begin{equation}
	U_{j}^{t}=(1-\beta)U_{j}^{t-1}+\beta  \frac{\sum_{i=1}^{b}\mathbb{I}(d_{i}=j)\mu(f_{i})}{\sum_{i=1}^{b}\mathbb{I}(d_{i}=j)}.
\end{equation}
$\mathbb{I}$ is the characteristic function that satisfies:
$$ \mathbb{I}(d_{i}=j)=\left\{
\begin{aligned}
	1,&d_{i}=j\\
	0, &d_{i} \neq j
\end{aligned}
\right.
$$
$\beta$ is the momentum, and $b$ is the number of samples in the current iteration.

\begin{table*}[t]
	\centering
	\caption{Performance (\%) comparison with state-of-the-art methods. ``UDA'' denotes unsupervised domain adaptation and ``DG'' denotes domain generalization. ``M'' is MobileNet-V2, ``R'' is ResNet and ``A'' is AlexNet. Bold numbers denote the best performance.}\label{sota}

	\begin{tabular}{clccccccccccccccccccc}
		\midrule
		~	&\multirow{2}{*}{Method} &\multirow{2}{*}{Net} &\multicolumn{2}{c}{VIPeR}&\multicolumn{2}{c}{PRID}  &\multicolumn{2}{c}{GRID} &\multicolumn{2}{c}{i-LIDS}&\multicolumn{2}{c}{Average} \\	\cmidrule(rl){4-5}  \cmidrule(rl){6-7} \cmidrule(rl){8-9}  \cmidrule(rl){10-11} \cmidrule(rl){12-13}
		~	&	~& ~& R-1  & mAP & R-1  & mAP& R-1  & mAP& R-1  & mAP&R-1  & mAP\\
		
		\midrule

		\multirow{4}{*}{UDA}
		&TJAIDL \cite{wang2018transferable}&M& 38.5&-&34.8&-&-&-&-&-&-&-\\
		
		&MMFAN \cite{lin2018multi}&R & 39.1&-&35.1&-&-&-&-&-&-&-\\
		&CAMEL&R&30.9&&&&&&&&&\\
		&UDML \cite{peng2016unsupervised}&-&31.5&-&24.2&- &-&-&49.3&-&-&-\\		
		&Synthesis \cite{bak2018domain}&R&43.0&-&43.0&- &-&-&56.5&-&-&-\\		
		&SSDAL \cite{su2016deep} &A&37.9&-&20.1&-&19.1&-&-& \\
		\midrule
		\multirow{11}{*}{DG}	
		
		&DIMN \cite{song2019generalizable}&M   &51.2&60.1& 39.2 &52.0&29.3&41.1&70.2&78.4&47.5&57.9\\		
		
		&DualNorm \cite{jia2019frustratingly} &M &53.9&-&60.4&-&41.4&-&74.8&-&57.6&-\\

		&DDAN \cite{chen2020dual} &M &52.3&56.4& 54.5&58.9&50.6&55.7&78.5&81.5&59.0&63.1\\		
		
		&DDAN+DualNorm \cite{chen2020dual} &M &56.5&60.8& 62.9&67.5&46.2&50.9&78.0&81.2&60.9&65.1\\		
		
		&DCCDN (Ours)&M &53.1&57.4&57.9&62.4&51.6&56.4&81.5&84.2&61.0&65.1\\
		
		&DCCDN+DualNorm (Ours)&M &55.0&59.8&65.4&69.5&47.7&51.7&79.3&82.3&61.9&65.8\\
		
		\cmidrule(rl){2-13}

		&DualNorm \cite{jia2019frustratingly}  &R &59.4&-&69.6&-&43.7&-&78.2&-&62.7&-\\

		&DDAN \cite{chen2020dual} &R  &52.9&57.2& 57.5&62.4&50.5&55.1&78.7&81.7&59.9&64.1\\		
		
		&DDAN+DualNorm \cite{chen2020dual} &R &55.5&60.0& 72.2&76.0&50.2&54.6&81.0&84.0&64.7&68.7\\

		&DCCDN (Ours)&R &59.9&63.9&67.8&71.3&\textbf{60.3}&\textbf{64.0}&83.0&85.5&67.8&70.9\\
		
		&DCCDN+DualNorm (Ours)&R &\textbf{61.7}&\textbf{66.0}&\textbf{74.0}&\textbf{77.1}&54.2&59.1&\textbf{84.3}&\textbf{86.6}&\textbf{68.6}&\textbf{72.2}\\
		
		\midrule
	\end{tabular}
\end{table*}

\begin{table}[t]
	\centering
	\small
	\caption{Ablation studies of the proposed methods (
		the average performance on the DG ReID benchmark is reported). Bold numbers denote the best performance.
		\vspace{-3.5mm}
	}\label{Tab:table2}
	\begin{tabular}{lccccc}
		\midrule
		
		Method  & R-1 &R-5 &R-10&mAP\\	
		\midrule
		
		$L_{CE}$  & 59.4 &69.1 &79.9&63.8\\	
		
		$L_{CE}$ +$L_{Adv}$  &59.4&69.1&80.1&63.7\\
		
		$L_{CE}$+$L_{CE}^{*}$  &61.1&70.4 & 80.6&65.3\\
		
		$L_{CE}$+$L_{CE}^{*}$ +$L_{Adv}^{*}$ &62.7&71.3&81.6&66.8\\

		$L_{CE}$+$L_{CE}^{*}$+$Do$-$test$ &65.1&73.8&83.2&68.8\\
		
		$L_{CE}$+$L_{CE}^{*}$ +$L_{Adv}^{*}$+$Do$-$test$&\textbf{67.8}&\textbf{75.3}&\textbf{84.1}&\textbf{70.9}\\

		\midrule
	\end{tabular}
	\vspace{-3.5mm}
\end{table}

\textbf{Coloring}
Based on the memory bank that stores the style information of each domain, we can estimate the feature map $f^{*}$ whose style is transferred to domain $d^{*}$.
For example, if we $do(d_{i}=j)$, we take the covariance and mean matrices of domain $j$ from the memory bank. Then, we transform $f_{i}$ according to $V_{j}$ and $U_{j}$:

\begin{equation}
	f_{i}^{*} = W_{j}^{-1}\hat{f_{i}}+U_{j},
\end{equation}
where  $W_{j}^{T}W_{j} = V_{j}$. $f_{i}^{*}$ can be viewed as the feature of identity $y_{i}$ in domain $j$ that satisfies $\overline{f_{i}^{*}}\overline{f_{i}^{*}}^{T}=V_{j}$.

\textbf{Objective function}
We utilize newly generated tuples $\{ z^{*}, y, d^{*} \}$ to compute the objective function and update the model.

\begin{equation}
	\begin{aligned}
		\min _{G} J(G)&=\gamma \left[-  H_{p_{G}}(d^{*} \mid z^{*})+H_{p_{G}} (y \mid z^{*})  \right] + H_{p_{G}} (y \mid z) \\
		&= \gamma \left[ -L_{ADV}^{*}+L_{CE}^{*} \right] + L_{CE}\label{e9},
	\end{aligned}
\end{equation}

where $G $ is the composition of networks $E$ and $M$. We use DAL for the generated tuples whose domain-class correlation is decomposed (the first and second items in Equation (\ref{e9}). We also compute a cross-entropy using the original sample to ensure that we capture enough discriminative features about people.

\textbf{Test phase}
During the test phase, the target domain is unseen. We propose to intervene in the target domain data by changing its domain style using the statistical information of the source domains in the memory bank. Thus, the target domain data are closer to the source domain data from the style perspective. We compute the expectation of the representation according to the intervention as the final representation of the target domain data.

\begin{equation}
	\begin{aligned}
		\mathbb{E}_{do(d)}\left[ z^{*} \right] &= \mathbb{E}_{do(d)}\left[ z(y,do(d=i) \right] \\
		&= \sum_{i=1}^{K} p(do(d=i))  z(y,do(d=i))) \label{e10}
	\end{aligned}
\end{equation}

Here, $K$ is the number of source domains. We choose an even distribution for $p(do(d=i))$ for the sake of simplicity. In future work, we can examine the use of other distributions, such as the distribution of the true source domain.

\subsection{Theoretical Analysis}
In this section, we analyze the effect of domain adversarial learning. We first introduce the average total effect (ATE) \cite{pearl2016causal} that measures the effect of one factor on another factor. Here, we focus on the total effect of domain $d$ on the final latent representation $z$:

\begin{equation}
	ATE_{d \rightarrow z}= \mathbb{E}_{i \neq j}[z(do(d=i)-z(do(d=j))].
\end{equation}
It is the expectation of the residual of $z$ caused by different interventions on $d$.

Then, we review the adversarial loss function:

\begin{equation}
	\min _{G} \max _{D} V(G,D)= - \mathbb{E}_{p_{G}(z,d)} log(p(d \mid z ))
	\label{e12}
\end{equation}
Here, $p_{G}(z,d)$ is the joint distribution of $z$ and $d$ for a given network $G$. Let us denote the marginal distribution $z$ for each domain as $\{p_{G}^{1}(z), p_{G}^{2}(z), ..., p_{G}^{K}(z)\}$; then, Equation (\ref{e12}) can be written as:

\begin{equation}
	\min _{G} \max _{D} V(G,D)= - \sum_{i=1}^{K}\mathbb{E}_{p_{G}^{i}(z)} log(p(d=i \mid z )) \label{e16}
\end{equation}

\begin{lemma}
	The global optimal solution for $V(G,D)$ is attained if and only if $p_{G}^{1}(z) = p_{G}^{2}(z)= ... =p_{G}^{K}(z)$. \label{l1} \cite{li2018deep}
\end{lemma}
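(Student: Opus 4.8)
The plan is to prove this by the standard two-stage min--max optimality argument used for GAN- and domain-adversarial-type objectives (as in \cite{li2018deep,ganin2016domain}): first hold $G$ fixed and solve the inner optimization over the discriminator $D$ in closed form, then substitute the optimal discriminator back into $V$ and study the resulting functional of the domain-conditional marginals $\{p_{G}^{i}(z)\}_{i=1}^{K}$ alone. Throughout I write the discriminator output as $q_{D}(d=i\mid z)$ (the quantity $p(d=i\mid z)$ appearing in the adversarial objective, Equation (\ref{e16})), which for each fixed $z$ lives on the probability simplex $\sum_{i=1}^{K} q_{D}(d=i\mid z)=1$, and I let $\pi_{i}=p(d=i)$ denote the domain prior (uniform, $\pi_{i}=1/K$, in the symmetric form written there).

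First I would carry out the inner optimization of the discriminator's classification objective. For fixed $G$ and fixed $z$, the relevant part of the objective is the pointwise cross-entropy $-\sum_{i=1}^{K}\pi_{i}\,p_{G}^{i}(z)\log q_{D}(d=i\mid z)$ between the fixed weights $\pi_{i}p_{G}^{i}(z)$ and the free variables $q_{D}(d=i\mid z)$. Introducing a Lagrange multiplier for the simplex constraint (or invoking Gibbs' inequality directly), the optimizer is the Bayes posterior
\[
q_{D^{*}}(d=i\mid z)=\frac{\pi_{i}\,p_{G}^{i}(z)}{\sum_{j=1}^{K}\pi_{j}\,p_{G}^{j}(z)},
\]
and strict convexity of $-\log$ guarantees this stationary point is the sought extremum.

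Next I would substitute $q_{D^{*}}$ back into $V$. Writing the mixture marginal as $p_{G}(z)=\sum_{j=1}^{K}\pi_{j}p_{G}^{j}(z)$, the objective collapses to
\[
V(G,D^{*})=H(d)-\sum_{i=1}^{K}\pi_{i}\,\mathbb{E}_{p_{G}^{i}(z)}\!\left[\log\frac{p_{G}^{i}(z)}{p_{G}(z)}\right]=H(d)-I(d,z),
\]
so the saddle value is governed by the mutual information $I(d,z)$ between domain and representation (equivalently, by the generalized Jensen--Shannon divergence among the $K$ marginals). The outer step over $G$ therefore reduces to driving $I(d,z)$ to its minimum. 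Since $I(d,z)\ge 0$ with equality \emph{if and only if} $z$ is independent of $d$ --- that is, $p_{G}^{i}(z)=p_{G}(z)$ for every $i$, i.e.\ $p_{G}^{1}(z)=p_{G}^{2}(z)=\cdots=p_{G}^{K}(z)$ --- the global optimum of the game is attained exactly at equal marginals, which is the claim.

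I expect the main obstacle to be the \emph{only if} direction: showing that optimality \emph{forces} all marginals to coincide, rather than equal marginals being merely sufficient. This rests on the strict equality case of the divergence (Gibbs' inequality / strict convexity of $t\mapsto t\log t$), and care is needed where some $p_{G}^{i}(z)=0$ so that the degenerate $0\log 0$ terms are handled by the convention noted after Equation (\ref{e14}). I would also be explicit about the min--max direction, so that ``global optimal solution for $V(G,D)$'' refers to the saddle value of the game and it is verified that the outer player $G$ (not $D$) is the one that collapses the marginals together.
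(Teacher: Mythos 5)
The paper itself offers no proof of this lemma --- it is imported verbatim from \cite{li2018deep} with a citation and no argument --- so there is nothing internal to compare against; your proposal must stand on its own, and it does. The two-stage argument you give (closed-form Bayes-posterior discriminator via Gibbs' inequality, then collapsing the value to $H(d)-I(d,z)$, i.e.\ the $\pi$-weighted generalized Jensen--Shannon divergence among the $K$ marginals, and invoking the strict equality case of the divergence for the ``only if'' direction) is exactly the standard multi-domain extension of the GAN optimality proof, and it is the route \cite{li2018deep} takes; your attention to the $0\log 0$ convention and to which player forces the marginals together is appropriate. One point worth making explicit rather than deferring: as literally written, Equation (\ref{e12}) has $V(G,D)=-\mathbb{E}_{p_{G}(z,d)}\log p(d\mid z)$ with $\max_{D}$, under which the discriminator would \emph{maximize} its own cross-entropy and the inner optimum would not be the Bayes posterior at all; your derivation silently adopts the intended convention (consistent with Equation (\ref{e1}), where the $-L_{ADV}$ term makes $\max_D$ a maximum-likelihood step for the discriminator). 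Since the whole proof hinges on the inner optimizer being $q_{D^{*}}(d=i\mid z)=\pi_{i}p_{G}^{i}(z)/\sum_{j}\pi_{j}p_{G}^{j}(z)$, you should state up front that you are reading the sign this way; with that fixed, the argument is complete and correct.
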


\begin{theorem}
	The global optimal solution of our objection function Equation (\ref{e9}) satisfies $ATE_{d \rightarrow z} = 0$. \label{t2}
\end{theorem}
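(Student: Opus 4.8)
The plan is to reduce the statement to Lemma \ref{l1} by recognizing that the adversarial term of Equation (\ref{e9}) is exactly the min-max game $V(G,D)$ of Equation (\ref{e12}) played on the intervened tuples $\{z^{*}, y, d^{*}\}$ rather than on the original tuples $\{z, y, d\}$. Writing $p_{G}^{i}(z^{*})$ for the marginal law of the intervened representation under $do(d=i)$, the adversarial term $-\gamma H_{p_G}(d^{*}\mid z^{*}) = -\gamma L_{ADV}^{*}$ is optimized over $D$ and $G$ in precisely the same form as $V(G,D)$, so Lemma \ref{l1} applies verbatim to the intervened distribution and its global optimum is characterized by equality of the per-domain marginals of $z^{*}$. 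The remaining work is to show that the global minimizer of $J(G)$ actually attains this optimum, and then to read off $ATE_{d\rightarrow z}=0$ from the equal marginals.

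First I would make the substitution explicit: since each intervened sample produced by the whitening--coloring map of the DCCD module carries the assigned domain label $d^{*}=i$ and representation $z^{*}=z(y,do(d=i))$, the adversarial part of $J(G)$ equals $-\sum_{i=1}^{K}\mathbb{E}_{p_{G}^{i}(z^{*})}\log p(d^{*}=i\mid z^{*})$, i.e. the per-domain form of Equation (\ref{e12}) with $z$ replaced by $z^{*}$. Second, I would invoke Lemma \ref{l1} for this game to conclude that at its global optimum $p_{G}^{1}(z^{*}) = p_{G}^{2}(z^{*}) = \dots = p_{G}^{K}(z^{*})$. Third, I would observe that $z(do(d=i))$ is by definition distributed according to $p_{G}^{i}(z^{*})$, so equality of these marginals forces $\mathbb{E}[z(do(d=i))] = \mathbb{E}[z(do(d=j))]$ for every pair $i\neq j$. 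Substituting into the definition
\[
ATE_{d\rightarrow z}=\mathbb{E}_{i\neq j}\bigl[z(do(d=i))-z(do(d=j))\bigr],
\]
every pairwise term has zero mean, and averaging over $i\neq j$ yields $ATE_{d\rightarrow z}=0$.

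The step I expect to be the main obstacle is justifying that the global minimizer of the full objective $J(G)$ actually drives the adversarial term to its Lemma \ref{l1} optimum, rather than settling for a compromise forced by the cross-entropy terms $\gamma H_{p_G}(y\mid z^{*})$ and $H_{p_G}(y\mid z)$. This is exactly where the decomposition must be invoked: Theorem \ref{t1} shows that on the original, correlated tuples the domain-invariance optimum and the class-discriminability optimum conflict by an amount $H(d)$, whereas on the intervened tuples the ``do'' operation severs the back-door path $d\leftarrow h\rightarrow y$, so $d^{*}$ and $y$ become independent and $H_{p_G}(d^{*}\mid y)=H(d^{*})$. I would make this quantitative by rerunning the inequality chain of Equation (\ref{e15}) with $z^{*}$ and $d^{*}$ in place of $z$ and $d$, obtaining $H_{p_G}(y\mid z^{*})\geq H_{p_G}(d^{*}\mid z^{*})-H(d^{*})$, whose right-hand side is maximized at $0$. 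Hence attaining $H_{p_G}(d^{*}\mid z^{*})=H(d^{*})$ costs nothing in the $H_{p_G}(y\mid z^{*})$ term; the minimum of $J(G)$ then equals the sum of the individual term-wise minima and is attained, so every global minimizer realizes each term's optimum, the adversarial term in particular sits at its Lemma \ref{l1} optimum, and the conclusion of the preceding paragraph follows.
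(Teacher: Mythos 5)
Your proposal follows essentially the same route as the paper's proof: it uses the decomposition to show $H_{p_G}(d^{*}\mid y)=H(d^{*})$, which drives the lower bound on $H_{p_G}(y\mid z^{*})$ to zero so the adversarial and classification terms attain their optima simultaneously, then invokes Lemma \ref{l1} to get equal per-domain marginals of $z^{*}$ and hence $ATE_{d\rightarrow z}=0$. Your write-up is somewhat more explicit about why the global minimizer of the full objective realizes each term's optimum, but the argument is the same.
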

\begin{proof}
	We first show that the first term $-  H_{p_{G}}(d^{*} \mid z^{*})$ and the second term $H_{p_{G}} (y \mid z^{*})$ in Equation (\ref{e9}) can reach the global optimum at the same time.
	When the first term reaches its global optimum, i.e., $H_{p_{G}}(d^{*} \mid z^{*} )=H_{p_{G}}(d^{*})$, using Equation (\ref{e15}), we obtain
\begin{equation}
		H_{p_{G}}(y \mid z^{*})\geq
		H_{p_{G}}(d^{*} \mid z^{*}) - H_{p_{G}}(d^{*} \mid y)  \label{e18}
	\end{equation}
	We note that since $y$ and $d^{*}$ are decomposed, we have $ H_{p_{G}}(d^{*} \mid y) =H_{p_{G}}(d^{*})$. Substituting into Equation (\ref{e18}), we obtain:
	
	\begin{equation}
		H_{p_{G}}(y \mid z^{*})\geq
		H_{p_{G}}(d^{*} \mid z^{*}) - H_{p_{G}}(d^{*} \mid y) =0 \label{e19}
	\end{equation}	
	
	Here, we can see that the lower bound in Equation (\ref{e15}) changes to 0 due to the decomposition of domain-class correlation. Therefore, the first and second terms can reach the global optimum at the same time.
	
	Then, when the first term reaches the optimum, according to Lemma \ref{l1}, we have $p_{G}^{1}(z^{*}) = p_{G}^{2}(z^{*})= ... =p_{G}^{K}(z^{*})$. Here, $p_{G}^{i}(z^{*})$ is the marginal distribution of the generated representation that is equal $p_{G}\left[ z(do(d=i))  \right]$. Thus, we have:
	\begin{equation}
		ATE_{d \rightarrow z}= \mathbb{E}_{i \neq j}[z(do(d=i)-z(do(d=j))] =0
	\end{equation}

\end{proof}

This clearly shows that our proposed method can effectively reduce the effect of the domain factor in the final representation.

\section{Experiment}
\subsection{Datasets and Settings}
\textbf{Datasets:} To evaluate the generalization ability of our proposed method, we conduct experiments on the large-scale DG ReID benchmark in \cite{song2019generalizable}. Source datasets contain CUHK02 \cite{li2013locally}, CUHK03 \cite{6909421}, Market1501 \cite{2015Scalable}, DukeMTMC-ReID \cite{8237667} and CUHK-SYSU PersonSearch \cite{8803643}. All images in the source datasets are used for training regardless of the train/test splits, and the total number of identities is $M$ = 18,530 with $N$ = 121,765 images.
Target datasets include VIPeR \cite{2008Viewpoint}, PRID \cite{hirzer2011person}, GRID \cite{DBLP:journals/ijcv/LoyXG10} and QMUL i-LDS \cite{peng2016unsupervised}. The detailed dataset statistics are given in Tab.~\ref{Tab:data_stastic}.

\textbf{Implementation details:} We adopt both MobileNetV2 with a width multiplier of 1.0 and ResNet50 as the backbone network for the composition of encoder $E$ and mapping network $M$. The mapping network is the last convolution layer of MobileNetV2 or the last 2 convolution layers of ResNet50. The discriminator $D$ is a fully connected layer with batch normalization and ReLU. To update our
base model, we use the SGD optimizer with a momentum of 0.9 and a weight decay of $5 \times 10^{-4}$. For MobileNetV2, the initial learning rate is 0.1, and it is trained for 100 epochs. For ResNet50, the initial learning rate is 0.05, and it is trained for 70 epochs. We decay the learning rate by 0.1 for every 40 epochs. The momentum  is set to $\beta=0.3$. The $\gamma$ in Equation \ref{e9} is set to 0.25.
\textbf{Evaluation metrics:} We follow the standard evaluation protocol in ReID. Specifically,
we adopt the cumulative matching characteristics (CMC) at Rank-1 (R-1) and mean average precision (mAP) for performance evaluation.

\subsection{Comparison with state-of-the-art methods}
In Table \ref{sota}, we compare the domain-class correlation decomposition network (DCCDN) with other state-of-the-art methods. We divide these methods into two groups: unsupervised domain adaptation (UDA) and domain generalization (DG).

\begin{figure}[t]
	\centering
	\subfigure[Different values of $\beta$.]{
		\begin{minipage}[t]{0.5\linewidth}
			\centering
			\includegraphics[width=1.7in]{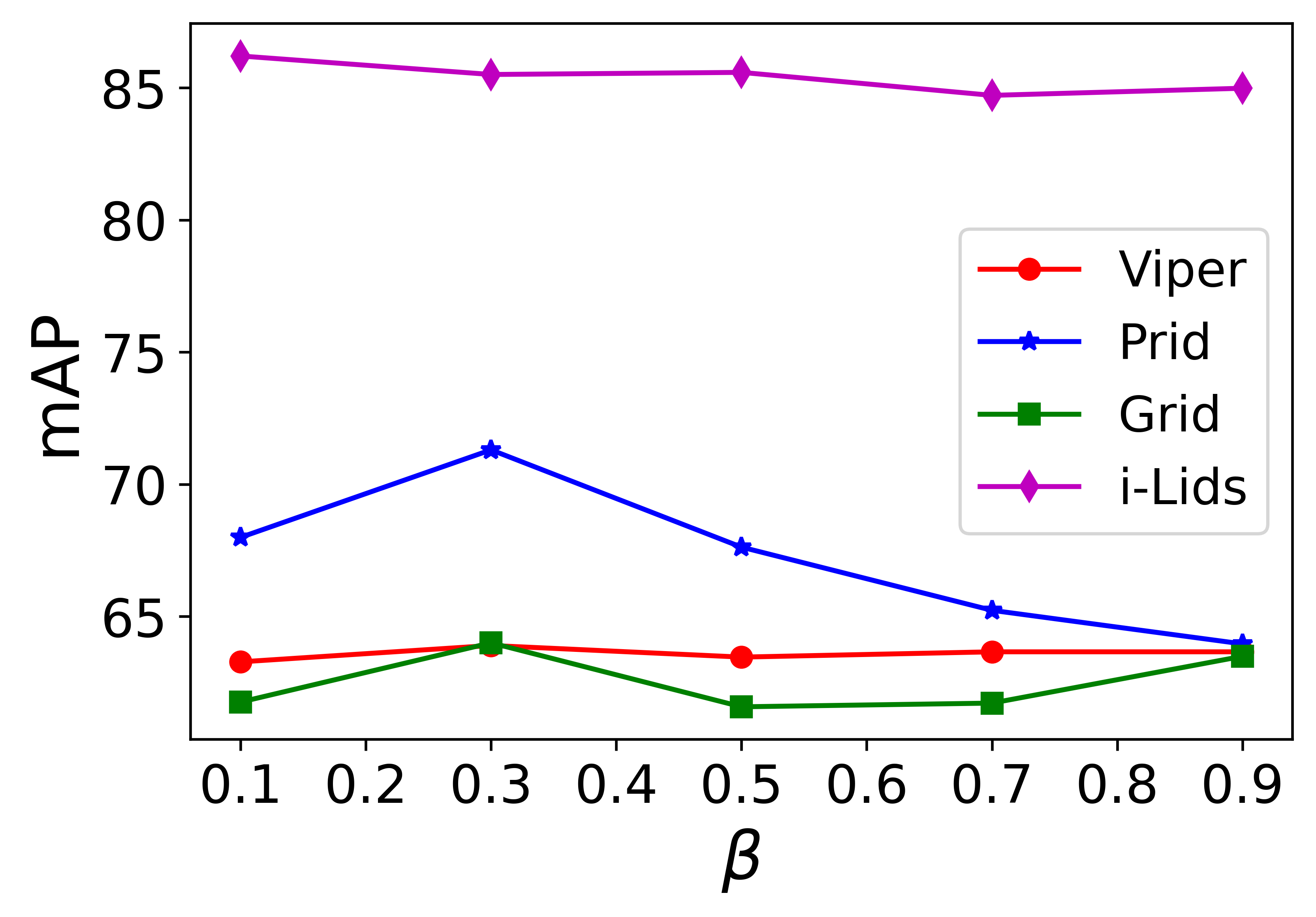}
		\end{minipage}%
	}%
	\subfigure[Different values of $\gamma$.]{
		\begin{minipage}[t]{0.5\linewidth}
			\centering
			\includegraphics[width=1.7in]{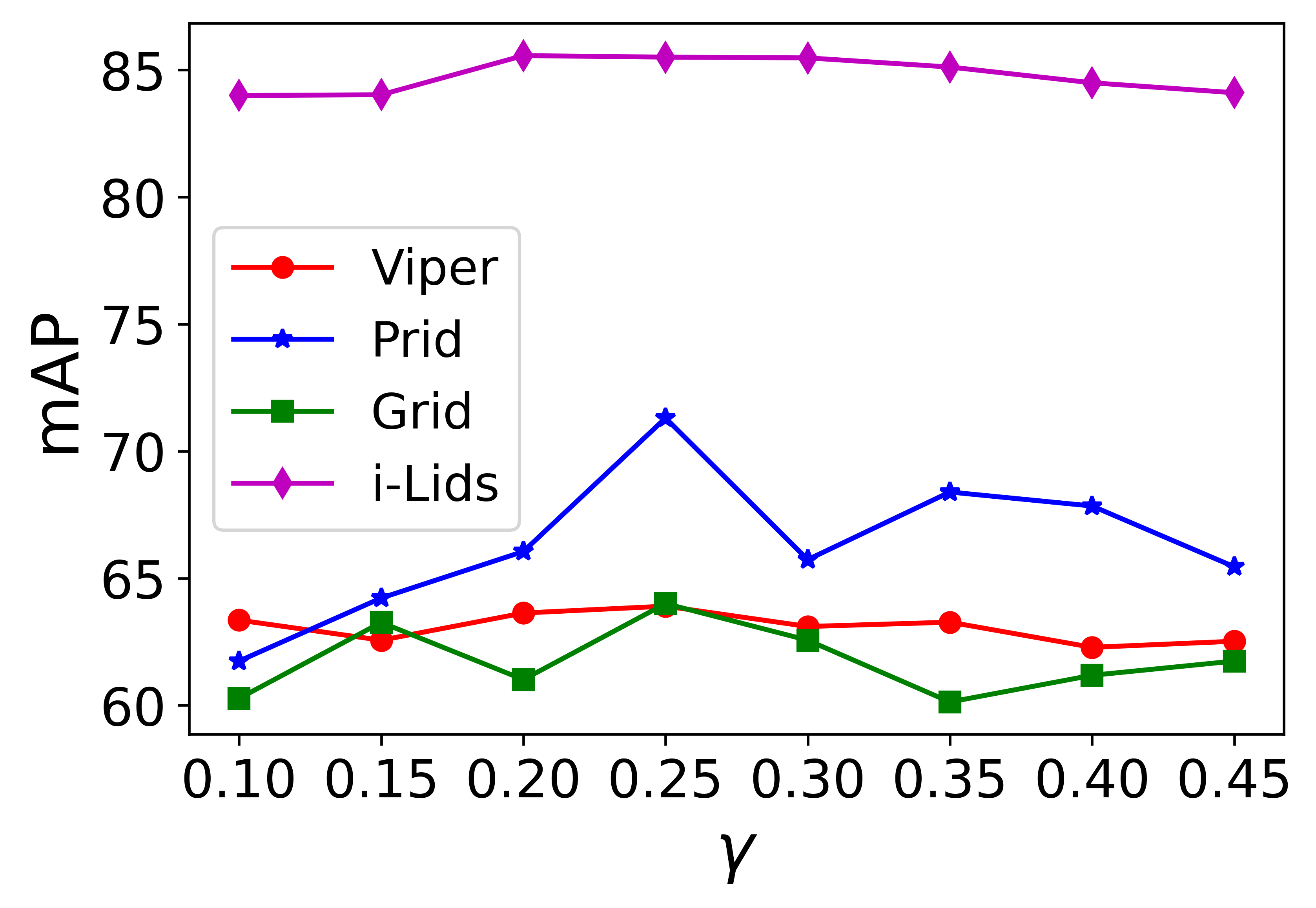}
		\end{minipage}%
	}%
	\centering
	\caption{ Evaluation with different values of import parameters.}\label{param}
\end{figure}

UDA methods focus on adapting a model trained on the source domain to the target domain. However, they differ from DG in that they ask for unlabeled data from the target domain. Nevertheless, our DCCDN significantly outperforms these UDA methods, indicating that the learned representations are generalizable.

Then, we compare with state-of-the-art DG methods for a fair comparison. We use both MobileNetV2 and ResNet50 as our network to evaluate the generality of DCCDN. Our DCCDN achieves the best mAP and R-1 on average for both MobileNetV2 and ResNet50. DIMN~\cite{song2019generalizable} relies on an additional mapping network to map the features into domain-invariant space, increasing the number of parameters and computational complexity. However, our DCCDN only adds a memory bank with few parameters and outperforms DIMN by a large margin on all of the datasets. Among these competitors, DualNorm \cite{jia2019frustratingly} that uses instance and batch normalization features and achieves comparable performance on GRID and i-LIDS. However, instance normalization may remove domain information and class information at the same time, which is the origin of the poor performance of DualNorm \cite{jia2019frustratingly} on GRID and i-LIDS. It is important to note that the normalization method in \cite{jia2019frustratingly} can be integrated into any network. Combined with Dualnorm \cite{jia2019frustratingly}, our DCCDN outperforms Dualnorm in all of the datasets. DDAN is another remarkable method \cite{chen2020dual}. Our DCCDN outperforms DDAN significantly, even without Dualnorm. This because  although DDAN \cite{chen2020dual} applies adversarial training, it will lose discriminative information due to domain-class correlation.
\subsection{Ablation study}
We conducted a comprehensive ablation study to demonstrate the effectiveness of each detailed component of our DCCDN. The results are summarized in Table \ref{Tab:table2}. The average performance of all target datasets is reported.

In Table \ref{Tab:table2}, $L_{CE}$ and $L_{ADV}$ denote cross-entropy loss and domain adversarial loss computed by original samples, respectively. $L_{CE}^{*}$ and $L_{ADV}^{*}$ are those computed by newly generated samples. $do$-$test$ denotes the use of Equation (\ref{e10}) to calculate the representation target domain data. Otherwise, we calculate the representation of the target domain data directly by forwarding the network. Comparing $L_{CE}$ and $L_{CE}$ +$L_{ADV}$, we find that $L_{ADV}$ has little effect because the original samples are domain-class correlated. However, comparing $L_{CE}$+$L_{CE}^{*}$+$Do$-$test$ with $L_{CE}$+$L_{CE}^{*}$ +$L_{Adv}^{*}$+$Do$-$test$, the performance gain provided by $L_{Adv}^{*}$ is clearly observed. This is because our newly generated samples are domain-class decomposed; thus, domain adversarial learning will not remove class information in the representations. Moreover, the proposed $do$-$test$ is highly effective because it transfers the domain style of the target domain data to that of the source domain data according to the memory bank.

\begin{table}[t]
		\centering
	\caption{Dataset stastics. ``Pr.": Probe, ``Ga.'':Gallery
	}\label{Tab:data_stastic}
	\begin{tabular}{cccccc}
		\toprule
		Domain                  & Dataset                  & \multicolumn{2}{c}{\# Train IDs} & \multicolumn{2}{c}{\# Train images} \\
		\midrule
		\multirow{5}{*}{Source} & CUHK02                   & \multicolumn{2}{c}{1816}         & \multicolumn{2}{c}{7264}            \\
		& CUHK03                   & \multicolumn{2}{c}{1467}         & \multicolumn{2}{c}{14097}           \\
		& Duke                     & \multicolumn{2}{c}{1812}         & \multicolumn{2}{c}{36411}           \\
		& Market1501               & \multicolumn{2}{c}{1501}         & \multicolumn{2}{c}{29419}           \\
		& PersonSearch             & \multicolumn{2}{c}{11934}        & \multicolumn{2}{c}{34574}           \\
		\midrule
		\multirow{2}{*}{Domain} & \multirow{2}{*}{Dataset} & \multicolumn{2}{c}{\# Test IDs}  & \multicolumn{2}{c}{\# Test images}  \\
		&                          & \#Pr. IDs       & \#Ga. IDs      & \#Pr. imgs       & \#Ga. imgs       \\
		\midrule
		\multirow{4}{*}{Target} & VIPeR                    & 316             & 316            & 316              & 316              \\
		& PRID                     & 100             & 649            & 100              & 649              \\
		& GRID                     & 125             & 900            & 125              & 1025             \\
		& i-LIDS                   & 60              & 60             & 60               & 60               \\
		\bottomrule
	\end{tabular}
\end{table}

\begin{table}[t]
	\centering
	\caption{Study on the layer for DCCD. Bold numbers denote the best performance.
	}\label{which_layer}
	\begin{tabular}{lccccc}
		\midrule
	
		Method  & R-1 &R-5 &R-10&mAP\\	
		\midrule
		$baseline$ &59.4&69.1&80.1&63.7\\	
		$layer1$  & 64.6 &73.1 &\textbf{84.4}&68.1\\			
		$layer2$&\textbf{67.8}&\textbf{75.3}&84.1&\textbf{70.9}\\
		$layer3$  &64.0&73.8&84.0&67.4\\				
		\midrule
	\end{tabular}
\end{table}

 \begin{table}[t]
	\centering
	\caption{Performance (\%) on large-scale datasets, where C2, C3, CS and D denote CUHK02, CUHK03, CUHKSYSU and DukeMTMC, respectively. }\label{large_dataset}
	\begin{tabular}{clcccc}
		\hline
		\multirow{2}{*}{Source} 	&\multirow{2}{*}{Method} &\multicolumn{2}{c}{Target: Market1501}&\multicolumn{2}{c}{Target: MSMT17}\\	
		\cline{3-6}
		~	&	~& R-1  & mAP & R-1  & mAP\\
		\hline
		\multirow{2}{*}{\makecell[c]{C2+C3\\+CS+D}}&Baseline &83.3&63.7&32.0&12.9 \\
		\cline{2-6}
		~&DCCDN&85.6&65.9&42.1&17.5\\
		\hline
	\end{tabular}
\end{table}

\begin{table}[t]
	\centering
	\caption{Performance (\%) with different amounts of source samples, where M, C2, C3, CS and D denotee Market1501, CUHK02, CUHK03, CUHKSYSU and DukeMTMC, respectively. }\label{less_data}
	\begin{tabular}{clcc}
		\hline
		\multirow{2}{*}{Source} 	&\multirow{2}{*}{Method} &\multicolumn{2}{c}{Target: Average}\\	
		\cline{3-4}
		~	&	~& R-1  & mAP \\
		\hline
		\multirow{2}{*}{\makecell[c]{M+C2+C3+CS+D \\ data num. 122k}}&Baseline &59.4&63.7 \\
		\cline{2-4}
		~&DCCDN&67.8&70.9\\	
		\hline
		\multirow{2}{*}{\makecell[c]{M+C2+C3+D \\ data num. 87k}}&Baseline &51.2&61.0 \\
		\cline{2-4}
		~&DCCDN&58.1&67.1\\		
		\hline
		\multirow{2}{*}{\makecell[c]{M+C3+D \\ data num. 79k}}&Baseline &49.7&59.5 \\
		\cline{2-4}
		~&DCCDN&57.4&66.2\\
		\hline
	\end{tabular}
\end{table}

\begin{figure*}[t]
	\centering
	\subfigure[$L_{CE}$ (ResNet-50).]{
		\begin{minipage}[t]{0.5\linewidth}
			\centering
			\includegraphics[width=3.8in]{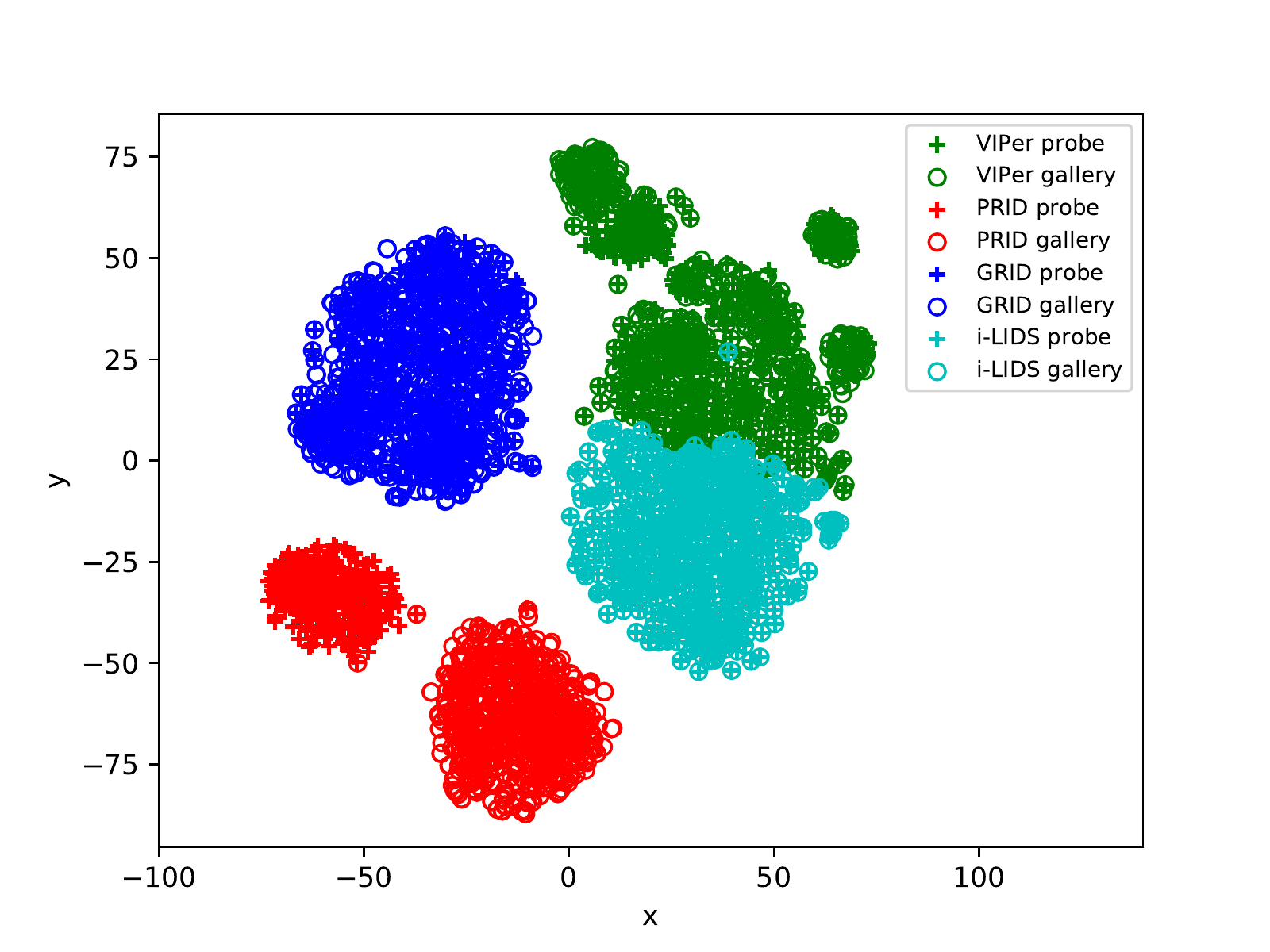}
		\end{minipage}%
	}%
	\subfigure[DCCDN (ResNet-50).]{
		\begin{minipage}[t]{0.5\linewidth}
			\centering
			\includegraphics[width=3.8in]{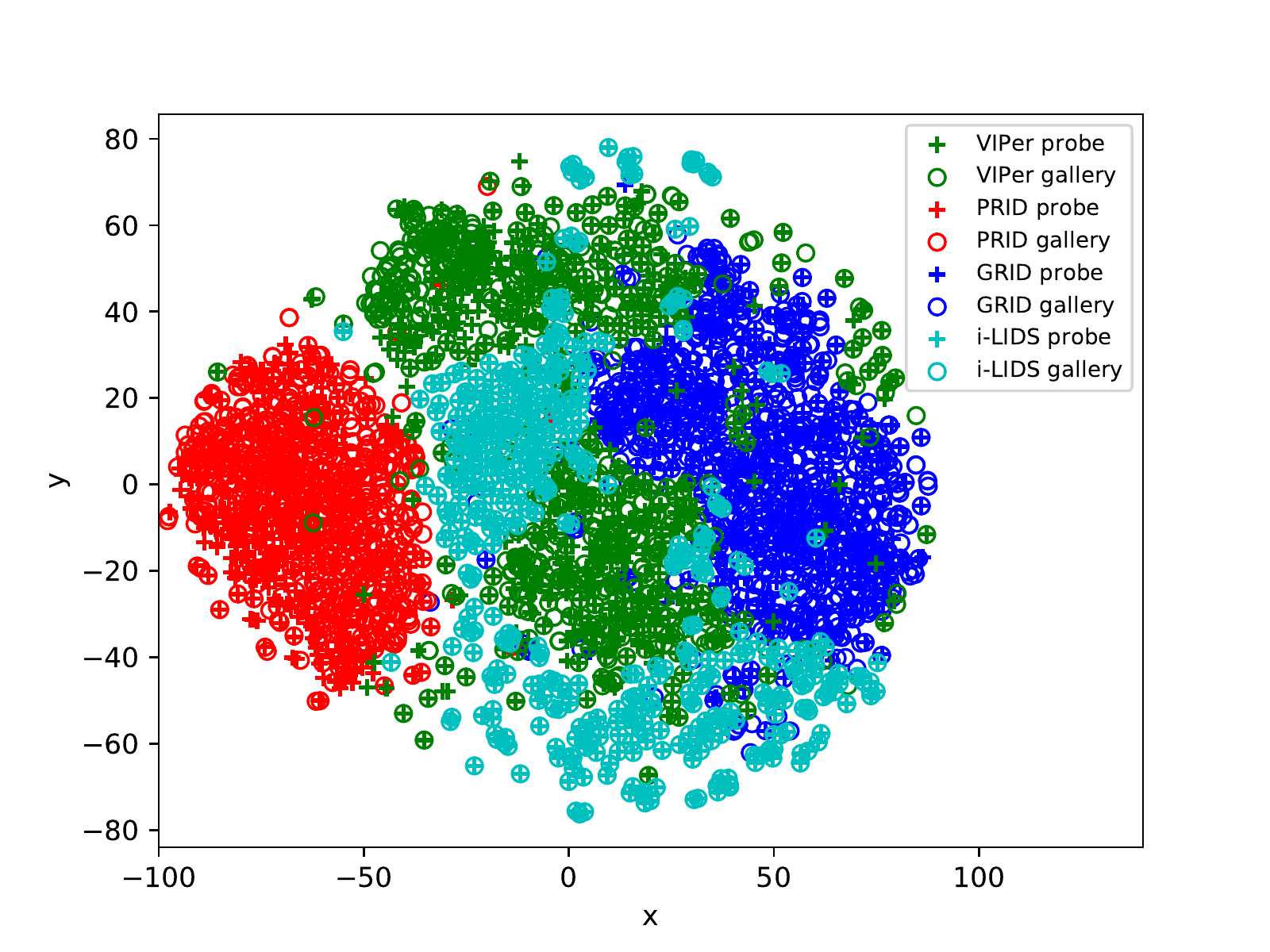}
		\end{minipage}%
	}%
	\centering
	\caption{The t-SNE visualization of the embedding vectors on four unseen target datasets (VIPeR, PRID, GRID, and i-LIDS).}	\label{tsne}
\end{figure*}

\begin{figure}[t]
	\centering
	\subfigure[CMC Curve of DCCDN.]{
		\begin{minipage}[t]{0.5\linewidth}
			\centering
			\includegraphics[width=1.75in]{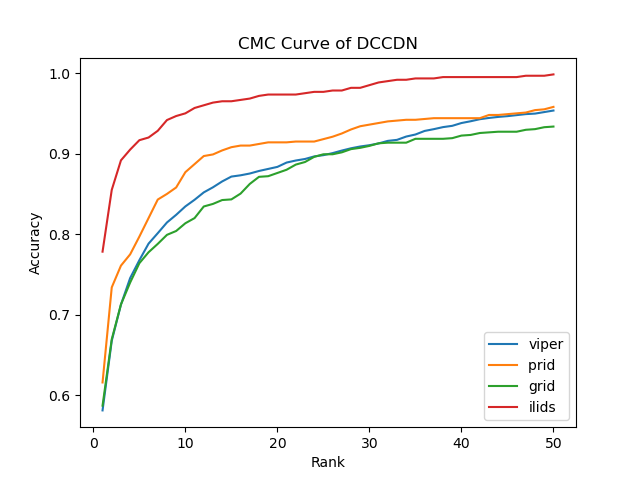}
		\end{minipage}%
	}%
	\subfigure[CMC Curve of Baseline.]{
		\begin{minipage}[t]{0.5\linewidth}
			\centering
			\includegraphics[width=1.75in]{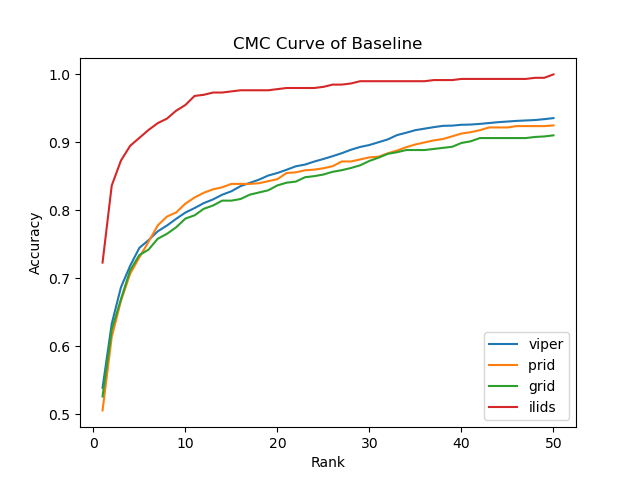}
		\end{minipage}%
	}%
	\centering
	\caption{ CMC Cureve.}	\label{cmc}
\end{figure}

\subsection{Study on the layer for performing DCCD} We conduct the proposed domain-class correlation decomposition in different layers of ResNet50 and report the results in Table \ref{which_layer}. It is clearly observed that in each layer, the DCCD module can provide significant performance gain. The best performance is obtained for insertion into layer 2.

\subsection{Important Parameters} We further investigate the impact of two important hyperparameters: the update momentum $\beta$ and the loss weight $\gamma$. The results are given in Fig.~\ref{param}. We find that the datasets i-Lids and Viper are robust with respect to parameter variations, while Prid and Grid are relatively sensitive. When we set the parameters $\beta=0.3$ and $\gamma=0.25$, the model achieves best performance. The curves are flat on VIPer and i-LIDS, showing that our model is robust with respect to parameters.

\subsection{T-SNE Visualization of Embeddings}
We use t-SNE~\cite{hinton2002stochastic} to visualize the embedding vectors of ResNet-50 on four unseen target datasets. We compare our DCCDN with training ResNet50 using only $L_{CE}$ loss and show the results in Figs.~\ref{tsne}. (a) and Fig.~\ref{tsne}. (b) respectively. Comparison of these two visualization results clearly shows that DCCDN successfully learns domain-invariant representations. The representations of DCCDN on four target datasets are clustered together, while those of $L_{CE}$ are separated. Moreover, the probe and gallery representations of $L_{CE}$ on VIPer are separated, which means that the representations are not discriminative. However, the probe and gallery representations of DCCDN on the same datasets are close to each other. This proves that DCCDN learns both domain-invariant and discriminative representations, benefiting from adversarial learning and domain-class correlation decomposition.

\subsection{Other Analysis}
\textbf{Performance on Large-scale Datasets.} In Table~\ref{large_dataset}, we show the performance of the baseline and DCCDN on two large-scale datasets: Market1501~\cite{zheng2015scalable} and MSMT17~\cite{wei2018person}. It is clearly observed  that DCCDN outperforms the baseline on both datasets, and particularly on MSMT17 that is a multiscene multitime person ReID dataset.

\textbf{Performance with less source data.} We show the results of baseline and DCCDN with different numbers of source domains in Table~\ref{less_data}. It is observed that the performance of DCCDN is not sensitive to source domain numbers. DCCDN outperforms the baseline by a large margin on all three settings.

\textbf{CMC Curves.} The CMC curves of the baseline and DCCDN on the four target datasets are given in Fig.~\ref{cmc}

\section{Conclusion}
In this work, we theoretically and experimentally confirm that in ReID, domain and class exhibit correlations, and these correlations will lead to the loss of discriminative features. To solve this problem, we have proposed a generalizable person ReID framework, called the domain-class correlation decomposition network. It decomposes such correlation and learns both generalizable and discriminative representations.
Extensive experiments and comprehensive analyses on the large-scale DG benchmark demonstrate its superiority over the state-of-the-art methods.


%

\ifCLASSOPTIONcaptionsoff
  \newpage
\fi



\bibliographystyle{IEEEtran}
\bibliography{IEEEabrv, paper}
%

%




\end{document}